
\documentclass[a4paper,11pt]{llncs}

\usepackage{times}
\usepackage{url}
\usepackage{latexsym}

\usepackage{color} 
\def\bR{\begin{color}{red}}  
\def\bB{\begin{color}{blue}}
\def\bM{\begin{color}{magenta}} 
\def\bC{\begin{color}{cyan}}
\def\bW{\begin{color}{white}} 
\def\bBl{\begin{color}{black}} 
\def\bG{\begin{color}{green}} 
\def\bY{\begin{color}{yellow}} 
\def\e{\end{color}}

\usepackage{amsmath}
\usepackage{amssymb}  
\usepackage{multirow}

\usepackage{tikz}
\usepackage{pgfplots}
\usetikzlibrary{trees}
\usetikzlibrary{topaths}
\usetikzlibrary{decorations.pathmorphing}
\usetikzlibrary{decorations.markings}
\usetikzlibrary{matrix,backgrounds,folding}
\usetikzlibrary{chains,scopes,positioning,fit}
\usetikzlibrary{arrows,shadows}
\usetikzlibrary{calc} 
\usetikzlibrary{chains}
\usetikzlibrary{shapes,shapes.geometric,shapes.misc}

\newcommand{{%
\beginpgfgraphicnamed{}
\InputIfFileExists{.tikz}{}{\input{./tikz/.tikz}}
\endpgfgraphicnamed}}[1]{{%
\beginpgfgraphicnamed{#1}
\InputIfFileExists{#1.tikz}{}{\input{./tikz/#1.tikz}}
\endpgfgraphicnamed}}

\newcommand{\InputIfFileExists{.tikz}{}{\input{./tikz/.tikz}}}[1]{\InputIfFileExists{#1.tikz}{}{\input{./tikz/#1.tikz}}}

\pgfdeclarelayer{edgelayer}
\pgfdeclarelayer{nodelayer}
\pgfsetlayers{background,edgelayer,nodelayer,main}

\tikzstyle{dot}=[circle,fill=black,draw=black]
\tikzstyle{every picture}=[baseline=(current bounding box).east,scale=0.5,node distance=5mm]
\tikzstyle{none}=[inner sep=0pt]
\tikzstyle{every loop}=[]

\tikzstyle{(null)}=[]
\tikzstyle{plain}=[]

\newcommand{\semantics}[1]{[\![ #1 ]\!]} 
\newcommand{\ov}{\overrightarrow}

\title{The Frobenius anatomy of word meanings I:\\
subject and object relative pronouns}

\author{}
\institute{}
\date{}

\begin{document}
\maketitle

\hspace{-3cm}\begin{tabular}{ccc}
Mehrnoosh Sadrzadeh & Stephen Clark &  Bob Coecke     \\
Queen Mary University of London &  University of Cambridge &  University of Oxford    \\
 School of Electronic Eng. and Computer Science &  Computer Laboratory & \quad Dept. of Computer Science    \\
{ \tt \small mehrs@eecs.qmul.ac.uk} &  {\tt \small sc609@cam.ac.uk} & {\tt \small coecke@cs.ox.ac.uk}   
\end{tabular}

\begin{abstract}

This paper develops a compositional vector-based semantics of
subject and object relative pronouns within a categorical
framework. Frobenius algebras are used to formalise the operations
required to model the semantics of relative pronouns, including
passing information between the relative clause and the modified noun phrase,
as well as copying,  combining, and discarding parts of the relative
clause. We develop two instantiations of the abstract semantics, one
based on a truth-theoretic approach and one based on corpus statistics.
\end{abstract}

\section{Introduction}

Ordered algebraic structures and sequent calculi have been used
extensively in Computer Science and Mathematical Logic. They have also
been used to formalise and reason about natural language.
 Lambek (1958) \cite{Lambek0} used the ordered algebra of residuated monoids to
model grammatical types, their juxtapositions and
reductions. Relational words such as verbs have implicative types and
are modelled using the residuals of the monoid multiplication \cite{Moortgat,Morrill}.
Later,  Lambek (1999) \cite{Lambek1} simplified these algebraic calculi in favour of
\emph{pregroups}. Here, there are no binary residual operations, but
each element of the algebra has a left and a right residual or adjoint.

In terms of semantics, pregroups do not naturally lend themselves to a
model-theoretic treatment \cite{Montague}.  However, pregroups are
suited to a radically different treatment of semantics, namely
distributional semantics \cite{Schutze}. Distributional semantics uses
vector spaces based on contextual co-occurrences to model the meanings
of words.   Coecke et al. (2010) \cite{Coeckeetal} show how a {\em compositional}
semantics can be developed within a vector-based framework, by
exploiting the fact that vector spaces with linear maps and pregroups
both have a compact closed categorical structure
\cite{KellyLaplaza,PrellerLambek}. Some initial attempts at
implementation include  Grefenstette and Sadrzadeh
(2011a) \cite{GrefenSadr1} and
 Grefenstette and Sadrzadeh
(2011b) \cite{GrefenSadr2}.

One problem with the distributional approach is that it is difficult
to see how the meanings of some words --- e.g. logical words such as
{\em and, or}, and relative pronouns such as {\em who, which, that,
  whose} --- can be modelled contextually.  Our focus in this paper is
on relative pronouns in the distributional compositional setting.

The difficulty with pronouns is that the contexts in which they occur
do not seem to provide a suitable representation of their meanings:
pronouns tend to occur with a great many nouns and verbs. Hence, if
one applies the contextual co-occurrence methods of distributional
semantics to them, the result will be a set of dense vectors which do
not discriminate between different meanings.
The current
state-of-the-art in compositional distributional semantics either
adopts a simple method to obtain a vector for a sequence of words,
such as adding or mutliplying the contextual vectors of the words
\cite{Mitchell:Lapata:08}, or, based on the grammatical structure,
builds linear maps for some words and applies these to the vector
representations of the other words in the string
\cite{BaroniEMNLP10,GrefenSadr1}. Neither of these approaches produce
vectors which provide a good representation for the meanings of
relative clauses.

In the grammar-based approach, one has to assign a linear map to the
relative pronoun, for instance a map $f$ as follows:
\[
\ov{\text{men who like Mary}} = f(\ov{\text{men}}, \ov{\mbox{like Mary}})
\]
However, it is not clear what this map should be.  Ideally, we do not
want it to depend on the frequency of the co-occurrence of the
relative pronoun with the relevant basis vectors. But both of the above
mentioned approaches rely heavily on the information provided by a
corpus to build their linear maps.  The work of
 Baroni and Zamparelli
(2010) \cite{BaroniEMNLP10} uses linear regression and approximates the
context vectors of phrases in which the target word has occurred,
and the work of  Grefenstette
and Sadrzadeh (2011a) \cite{GrefenSadr1} uses the sum of Kronecker
products of the arguments of the target word across the corpus. 

The semantics we develop for relative pronouns and clauses uses the
general operations of a Frobenius algebra over vector spaces
\cite{CoeckeVic} and the structural categorical morphisms of vector
spaces. We do not rely on the co-occurrence frequencies of the
pronouns in a corpus and only take into account the structural roles
of the pronouns in the meaning of the clauses.  The computations of
the algebra and vector spaces are depicted using string diagrams
\cite{Joyal}, which depict the interactions that occur among the words
of a sentence.  In the particular case of relative clauses, they
visualise the role played by the relative pronoun in passing information
between the clause and the modified noun phrase, as well as copying,
combining, and even discarding parts of the relative clause.

We develop two instantiations of the abstract semantics, one based on
a truth-theoretic approach, and one based on corpus statistics, where
for the latter the categorical operations are instantiated as matrix multiplication
and vector component-wise multiplication.  As a result, we will obtain the following for the meaning of a subjective relative clause:
\[
\ov{\text{men who like Mary}} = \ov{\text{men}} \odot (\overline{\text{love}}  \times \ov{\text{Mary}})
\]
The rest of the paper
introduces the categorical framework, including the formal definitions
relevant to the use of Frobenius algebras, and then shows how these
structures can be used to model relative pronouns within the
compositional vector-based setting.

\section{Compact Closed Categories and  Frobenius Algebras}
\label{sec:frob}

We briefly review the main concepts of compact closed categories and Frobenius Algebras, focusing on the concepts and constructions that we are going to use in our models.  For a formal presentation, see \cite{KellyLaplaza,Kock,Baez}; for an informal introduction, see \cite{CoeckePaq}. 

A compact closed category has objects $A, B, C, ...$, morphisms $f \colon A \to B$, $g \colon B \to C$, composition $g\circ f$, a monoidal tensor  $A \otimes B$ that has a unit $I$, and  for each object $A$  two  objects $A^r$ and $A^l$ together with the following morphisms:
\begin{align*}
A \otimes A^r   \stackrel{\epsilon_A^r} {\longrightarrow} &I   \stackrel{\eta_A^r}{\longrightarrow} A^r \otimes A\\
A^l \otimes A \stackrel{\epsilon_A^l}{\longrightarrow} &I \stackrel{\eta_A^l}{\longrightarrow} A \otimes A^l\
\end{align*}
These  structural morphisms  of the compact  closed structure  satisfy the following equalities, some times referred to as \emph{yanking}, where $1_A$ is the identity morphism on the object $A$:
\begin{align*}
(1_A \otimes \epsilon_A^l) \circ (\eta_A^l \otimes 1_A) & = 1_A \hspace{2cm} (\epsilon_A^r \otimes 1_A) \circ (1_A \otimes \eta_A^r) = 1_A\\
(\epsilon_A^l \otimes 1_A) \circ (1_{A^l} \otimes \eta_A^l) &= 1_{A^l} \hspace{2cm} (1_{A^r} \otimes \epsilon_A^r) \circ (\eta_A^r \otimes 1_{A^r}) = 1_{A^r}
\end{align*}

\medskip
A pregroup is a  partially  ordered  compact closed category. This means that the  objects   are elements of a partially ordered monoid and between any two objects $p,q$ there exists a morphism of type $p \to q$ iff $p \leq q$. Compositions of morphisms are obtained by transitivity and the identities by  reflexivity of the partial order. The  tensor of the category is the monoid multiplication, and the four yanking inequalities become as follows:
\begin{align*}
\epsilon_p^r = p \cdot p^r \leq 1 &\hspace{2cm} \epsilon_p^l = p^l \cdot p \leq 1\\
\eta_p^r = 1 \leq p^r \cdot p &\hspace{2cm}  \eta_p^l = 1 \leq p \cdot p^l
\end{align*}
In the rest of the paper, by  $Preg$ we either mean any or a particular pregroup, as determined by  the context. 

Finite dimensional vector spaces and linear maps also constitute a compact closed category, to which we refer as  $FVect$.  Finite dimensional vector spaces $V, W, ...$ are  objects of this category;  linear maps $f \colon V \to W$ are its morphisms with composition being  the composition of linear maps. The  tensor product of the spaces $V \otimes W$ is the  linear algebraic tensor product, whose  unit is the scalar field of the vector spaces; in our case this is the field of reals $\mathbb{R}$.  As opposed to the tensor product in $Preg$,  the tensor between vector spaces is symmetric, hence we have a natural isomorphism $V \otimes W \cong  W \otimes V$. As a result of the symmetry of the tensor,  
we   have   $V^l \cong V^r \cong V^*$, where $V^*$ is  the dual  space  of  $V$. When the basis vectors of  vector spaces are fixed, we furthermore obtain that $V^* \cong V$. Elements of  a vector space $V$, that is   vectors $\overrightarrow{v} \in V$, are presented by linear maps of type $\mathbb{R} \to V$, which can be  defined by setting $1\mapsto \overrightarrow{v}$ when relying on linearity.  We still denote these maps by $\overrightarrow{v}$. 

Given a basis $\{r_i\}_i$ for a vector spaces $V$, the epsilon maps are given by the  inner product extended by linearity, that is we have:
\[
\epsilon^l  =  \epsilon^r \colon   V^* \otimes V \to \mathbb{R} \quad \mbox{given by} \quad  \sum_{ij} c_{ij} \ \psi_i \otimes \phi_j  \quad \mapsto \quad \sum_{ij} c_{ij} \langle \psi_i \mid \phi_j \rangle
\]
Eta maps are   defined as  the linear map representing a vector in $V \otimes V^*$:
\[
\eta^l = \eta^r \colon   \mathbb{R} \to V \otimes V^* \quad \mbox{given by} \quad    1 \quad \mapsto \quad \sum_i r_i \otimes r_i
\]
As  mentioned above,  the  assignment of an image to $1\in \mathbb{R}$  extends to all  numbers by linearity.

A Frobenius algebra over a symmetric monoidal  category $({\cal C}, \otimes, I)$ is a tuple $(X, \Delta, \iota, \mu, \zeta)$ where for $X$ an object of ${\cal C}$,  the triple
\[
(X, \Delta\colon X \to X \otimes X, \iota\colon X \to I)
\]
 is an internal comonoid,  the triple 
\[
(X, \mu\colon  X \otimes X \to X, \zeta\colon I \to X)
\]
 is an  internal monoid, and which together moreover  satisfy the following \emph{Frobenius} condition:

\vspace{-0.4cm}
\begin{align*}
(\mu \otimes 1_X) \circ (1_X \otimes \Delta) \ = \  \Delta \circ \mu  \ = \  (1_X \otimes \mu) \circ (\Delta \otimes 1_X) 
\end{align*}

Frobenius algebras were originally introduced in 1903 by F.~G.~Frobenius in the context of proving representation theorems for group theory \cite{Frob}. Since then,  they have found applications in other fields of mathematics and physics, e.g. in topological quantum field theory \cite{Kock}. The above general categorical definition is due to Carboni and Walters \cite{CarboniWal}. 

In the category of finite dimensional vector spaces and linear maps \textbf{FVect},  any vector space $V$ with a fixed basis $\{\ov{v_i}\}_i$ has a Frobenius algebra over it, explicitly given by:
\[
\begin{array}{lll}
\Delta :: \ov{v_i} \mapsto \ov{v_i} \otimes \ov{v_i} &\qquad\qquad& \iota:: \ov{v_i} \mapsto 1\vspace{3mm}\\
\mu:: \ov{v_i} \otimes \ov{v_j} \mapsto 
\delta_{ij}\ov{v_i} :=\left\{ \begin{array}{lll}
\ov{v_i} &\ \ \  & i=j\\
\ov{0} & & i\not = j
\end{array}\right.
&\qquad\qquad& \zeta:: 1 \mapsto   \sum_i  \ov{v_i}  
\end{array}
\]

Frobenius algebras  over  vector spaces with orthonormal bases  are moreover  \emph{special} and \emph{commutative}. A commutative  Frobenius algebra satisfies the following two conditions for $\sigma: X \otimes Y \to Y \otimes X$ the symmetry morphism of $({\cal C}, \otimes, I)$:
\[
\sigma \circ \Delta = \Delta \hspace{2cm} \mu \circ \sigma = \mu
\]
A special Frobenius algebra is one that satisfies the following axiom:
\[
\mu \circ \Delta = 1
\]
The vector spaces of distributional models that we work with have fixed  orthonormal bases, hence they have special commutative Frobenius algebras over them. 

The comonoid's comultiplication  of a special commutative Frobenius algebra over a vector space    encodes  vectors  of lower dimensions  into vectors  of  higher dimensional tensor spaces; this operation is usually referred to by \emph{copying}. In linear algebraic terms, $\Delta(\ov{v})\in  V \otimes V$ is a diagonal matrix whose diagonal elements are the coefficients (or weights)  of $\ov{v} \in V$.  The corresponding algebraic operations encode elements of  higher dimensional tensor spaces into lower dimensional spaces; this operation is usually referred to by  \emph{uncopying}. For $\ov{w} \in V\otimes V$, when represented as a matrix, we have that $\mu(\ov{w}) \in V$ is a vector consisting only of the diagonal elements of $\ov{w}$.

As a concrete example, take $V$ to be a two dimensional space with basis $\{\ov{v_1}, \ov{v_2}\}$, then the basis of $V \otimes V$ is $\{\ov{v_1} \otimes \ov{v_1}, \ov{v_1} \otimes \ov{v_2}, \ov{v_2} \otimes \ov{v_1}, \ov{v_2} \otimes \ov{v_2}\}$. For a vector $v = a \ov{v_1} + b \ov{v_2}$ in $V$ we have:
\[
\Delta (v) = \Delta \left(\begin{array}{c} a\\b\end{array}\right)
 = 
 \left(\begin{array}{cc} a&0\\0&b\end{array}\right) = a \, \ov{v_1} \otimes \ov{v_1} + b \, \ov{v_2} \otimes \ov{v_2} 
\]
And for  $\ov{w} = a \, \ov{v_1} \otimes \ov{v_1} + b \, \ov{v_1} \otimes \ov{v_2} + c \, \ov{v_2} \otimes \ov{v_1} + d \, \ov{v_2} \otimes \ov{v_2}$ in $V \otimes V$,  we have:
\[
\mu (w) = \mu \left(\begin{array}{cc} a&b\\c&d\end{array}\right) = 
\left(\begin{array}{c} a\\d\end{array}\right)
= a \, \ov{v_1} + d \, \ov{v_2}
\]

\section{String Diagrams} 

The  framework of compact closed categories and Frobenius algebras comes with a  diagrammatic calculus that visualises  the derivations thereof. These diagrams also simplify the categorical and vector space computations of our setting to a great extent. 

We briefly introduce the fragment of this calculus that we will use. Morphisms are depicted by boxes and objects   by lines,  representing their identity morphisms. For instance  a morphism $f \colon A \to B$, and an object $A$ with the identity arrow  $1_A \colon A \to A$ are depicted as follows:

\vspace{-0.2cm}
\begin{center}
  {%
\beginpgfgraphicnamed{compact-diag}
\InputIfFileExists{compact-diag.tikz}{}{\input{./tikz/compact-diag.tikz}}
\endpgfgraphicnamed}
\end{center}

\noindent The tensor products of the objects and morphisms are depicted by juxtaposing their diagrams side by side, whereas compositions of morphisms are depicted by putting one on top of the other, for instance the object $A \otimes B$, and the morphisms $f \otimes g$ and $f \circ h$, for $f \colon A \to B, g \colon C \to D$, and $h \colon B \to C$ are depicted as follows:

\vspace{-0.5cm}
\begin{center}
  {%
\beginpgfgraphicnamed{compact-diag-tensor}
\InputIfFileExists{compact-diag-tensor.tikz}{}{\input{./tikz/compact-diag-tensor.tikz}}
\endpgfgraphicnamed}
\end{center}

\vspace{-0.2cm}
\noindent The $\epsilon$ maps are depicted by cups,  $\eta$ maps by caps, and yanking by their composition and straightening the strings.  For instance, the diagrams for  $\epsilon^l \colon A^l \otimes A \to I$,  $\eta \colon I \to A\otimes A^l$ and $(\epsilon^l \otimes 1_A) \circ (1_A \otimes \eta^l)  = 1_A$ are  as follows:
\begin{center}
  {%
\beginpgfgraphicnamed{compact-cap-cup}
\InputIfFileExists{compact-cap-cup.tikz}{}{\input{./tikz/compact-cap-cup.tikz}}
\endpgfgraphicnamed}
  \qquad
    {%
\beginpgfgraphicnamed{compact-yank}
\InputIfFileExists{compact-yank.tikz}{}{\input{./tikz/compact-yank.tikz}}
\endpgfgraphicnamed}
\end{center}

\noindent The composition of the  $\epsilon$ and $\eta$ maps with other morphisms is depicted as before, that is by juxtaposing them one above the other. For instance the compositions $(1_{B^l} \otimes f)\circ \epsilon^l$ and $\eta^l \circ (1_{A^l} \otimes f)$ are depicted as follows:
\begin{center}
  {%
\beginpgfgraphicnamed{compact-comp-mix}
\InputIfFileExists{compact-comp-mix.tikz}{}{\input{./tikz/compact-comp-mix.tikz}}
\endpgfgraphicnamed}
\end{center}
 
As for Frobenius algebras, the   comonoid and monoid morphisms  are depicted as follows:

\begin{center}
{%
\beginpgfgraphicnamed{comp-alg-coalg}
\InputIfFileExists{comp-alg-coalg.tikz}{}{\input{./tikz/comp-alg-coalg.tikz}}
\endpgfgraphicnamed}
\end{center}

\noindent The \emph{Frobenius} condition is depicted as follows:

\begin{center}
{%
\beginpgfgraphicnamed{equation}
\InputIfFileExists{equation.tikz}{}{\input{./tikz/equation.tikz}}
\endpgfgraphicnamed}
\end{center} 

%
%
%
%
\noindent The defining equations of a commutative special Frobenius algebra guarantee that any picture depicting  a Frobnius  computation can be reduced to a normal form that only depends on the number of input and output strings of the nodes, independently of their topology.  These normal forms can be simplified to so-called `spiders': 
\[
{%
\beginpgfgraphicnamed{spider111}
\begin{tikzpicture}
	\begin{pgfonlayer}{nodelayer}
		\node [style=none] (0) at (-4.75, 3.25) {};
		\node [style=none] (1) at (-2.75, 3.25) {};
		\node [draw, thick, style=none, minimum size=0.2 cm, circle, fill=white] (2) at (-3.75, 2.25) {};
		\node [style=none] (3) at (-1.75, 2.25) {};
		\node [style=none] (4) at (-1.75, 3.25) {};
		\node [draw, thick, style=none, minimum size=0.2 cm, circle, fill=white] (5) at (-2.75, 1.25) {};
		\node [style=none] (6) at (-1.75, -0.25) {};
		\node [style=none] (7) at (-0.5, 3.25) {};
		\node [style=none] (8) at (0.5, -0.75) {$=$};
		\node [style=none] (9) at (2.25, 1) {};
		\node [style=none] (10) at (5.75, 1) {};
		\node [draw, thick, style=none, minimum size=0.2 cm, circle, fill=white] (11) at (4, -0.75) {};
		\node [style=none] (12) at (3, 1) {};
		\node [style=none] (13) at (4.5, 1) {$\cdots$};
		\node [style=none] (14) at (2.25, -2.5) {};
		\node [style=none] (15) at (5.75, -2.5) {};
		\node [style=none] (16) at (4.75, -2.5) {$\cdots$};
		\node [style=none] (17) at (-5.25, -4.5) {};
		\node [style=none] (18) at (-3.25, -4.5) {};
		\node [draw, thick, style=none, minimum size=0.2 cm, circle, fill=white] (19) at (-4.25, -3.5) {};
		\node [style=none] (20) at (-3.25, -2.5) {};
		\node [style=none] (21) at (-1, -4.5) {};
		\node [style=none] (22) at (-1.75, -1) {};
		\node [style=none] (23) at (-2.25, -3.5) {};
		\node [style=none] (24) at (-2.25, -4.5) {};
		\node [draw, thick, style=none, minimum size=0.2 cm, circle, fill=white] (25) at (-3.25, -2.5) {};
		\node [draw, thick, style=none, minimum size=0.2 cm, circle, fill=white] (26) at (4, -0.75) {};
		\node [style=none] (27) at (3, -2.5) {};
		\node [draw, thick, style=none, minimum size=0.2 cm, circle, fill=white] (28) at (-1.75, -0.25) {};
		\node [draw, thick, style=none, minimum size=0.2 cm, circle, fill=white] (29) at (-1.75, -1) {};
		\node [style=none] (30) at (-2.75, -1.75) {$\cdot$};
		\node [style=none] (31) at (-2.5, -1.5) {$\cdot$};
		\node [style=none] (32) at (-2.25, -1.25) {$\cdot$};
		\node [style=none] (33) at (-2.5, 0.75) {$\cdot$};
		\node [style=none] (34) at (-2.25, 0.5) {$\cdot$};
		\node [style=none] (35) at (-2, 0.25) {$\cdot$};
	\end{pgfonlayer}
	\begin{pgfonlayer}{edgelayer}
		\draw [style=thick, bend right=90, looseness=1.75] (0.center) to (1.center);
		\draw [style=thick, bend right=90, looseness=1.75] (2.center) to (3.center);
		\draw [style=thick] (4.center) to (3.center);
		\draw [style=thick, bend left=270, looseness=1.75] (9.center) to (10.center);
		\draw [style=thick] (12.center) to (11.center);
		\draw [style=thick, bend left=90, looseness=1.50] (14.center) to (15.center);
		\draw [style=thick, in=90, out=-15, looseness=0.75] (22.center) to (21.center);
		\draw [style=thick, in=30, out=-90] (7.center) to (6.center);
		\draw [thick, bend left=90, looseness=1.75] (17.center) to (18.center);
		\draw [thick, bend left=90, looseness=1.75] (19.center) to (23.center);
		\draw [style=thick](23.center) to (24.center);
		\draw [style=thick](27.center) to (26.center);
		\draw [style=thick](6.center) to (22.center);
	\end{pgfonlayer}
\end{tikzpicture}}
\endpgfgraphicnamed}
\]

 In the category $FVect$, apart from  spaces $V, W$, which are objects of the category, we also have  vectors $\ov{v}, \ov{w}$. These are depicted by  their representing morphisms and as   triangles with a number of strings emanating  from them. The number of strings of a triangle denote the tensor order of the  vector, for instance,  $\ov{v} \in V, \ov{v'} \in V \otimes W$, and $\ov{v''} \in V \otimes W \otimes Z$ are depicted as follows:
\begin{center}
  {%
\beginpgfgraphicnamed{compact-diag-triangle}
\InputIfFileExists{compact-diag-triangle.tikz}{}{\input{./tikz/compact-diag-triangle.tikz}}
\endpgfgraphicnamed}
\end{center}
Application of an $\epsilon$ map to a vector is depicted similarly, for instance $\epsilon^l(\ov{v})$ is depicted by the  composition $V^l \otimes V \stackrel{\epsilon^l}{\longrightarrow} I \stackrel{\ov{v}}{\longrightarrow} V$ and $\epsilon^r(\ov{v})$ by $V \otimes V^r \stackrel{\epsilon^r}{\longrightarrow} I \stackrel{\ov{v}}{\longrightarrow} V$. The diagrams for these  applications are as follows:
\begin{center}
  {%
\beginpgfgraphicnamed{comp-epsilon-app}
\InputIfFileExists{comp-epsilon-app.tikz}{}{\input{./tikz/comp-epsilon-app.tikz}}
\endpgfgraphicnamed}
\end{center} 

\smallskip
\noindent
Applications of the Frobenius maps to vectors are depicted in a similar fashion; for instance $\mu(\ov{v}\otimes \ov{v})$ is the composition $I \otimes I \stackrel{\ov{v} \otimes \ov{v}} {\longrightarrow} V \otimes V \stackrel{\mu}{\longrightarrow} V$ and $\iota(\ov{v})$ is the composition $I    \stackrel{\ov{v}}{\longrightarrow}V \stackrel{\iota}{\longrightarrow} I$, depicted as follows:
\begin{center}
  {%
\beginpgfgraphicnamed{comp-frob-app}
\InputIfFileExists{comp-frob-app.tikz}{}{\input{./tikz/comp-frob-app.tikz}}
\endpgfgraphicnamed}
\end{center}

\section{Vector Space Interpretations}

The grammatical structure of a language is encoded in the category
{\em Preg\/}: objects are grammatical types (assigned to words of the
language) and morphisms are grammatical reductions (encoding the
grammatical formation rules of the language). For instance, the
grammatical structure of the sentence ``Men love Mary''  is encoded in
the assignment of types $n$ to the noun phrases ``men'' and ``Mary''
and $n^r\otimes s\otimes n^l$ to the verb ``love'', and in the
reduction map $\epsilon_n^l \otimes 1_s \otimes \epsilon_n^r$.  The
application of this reduction map to the tensor product of the word types
in the sentence results in  the type $s$:
\[
(\epsilon_n^l \otimes 1_s \otimes \epsilon_n^r)(n \otimes (n^r \otimes s \otimes n^l) \otimes n) \to s
\]
\noindent
To each reduction map corresponds a
string diagram that depicts the structure of reduction:

 \begin{center}
  {%
\beginpgfgraphicnamed{preg-parse-eg}
\InputIfFileExists{preg-parse-eg.tikz}{}{\input{./tikz/preg-parse-eg.tikz}}
\endpgfgraphicnamed}
\end{center}

In  Coecke et al. (2010) \cite{Coeckeetal} the pregroup types and reductions are
interpreted as vector spaces and linear maps, achieved via a
homomorphic mapping from {\em Preg} to {\em FVect}. Categorically
speaking, this map is a strongly monoidal functor:

\[
F \colon \mbox{\em Preg} \to \mbox{\em FVect}
\]
\noindent
It assigns vector spaces to the basic types as follows:
\[
F(1) = I \qquad F(n)= N \qquad F(s) = S
\]
and to the compound types by monoidality as
follows; for $x,y$ objects of {\em Preg\/}:

\[
F(x\otimes y) = F(x) \otimes F(y)
\]
\noindent
Monoidal functors preserve the compact structure; that is the
following holds:

\[
F(x^l) = F(x^r) = F(x)^*
\]
\noindent
For instance, the interpretation of a transitive verb is computed as follows:
\begin{align*}
F(n^r \otimes s \otimes n^l) = F(n^r) \otimes F(s) \otimes F(n^l) =&  \\
 F(n)^* \otimes F(s) \otimes F(n)^* =  N\otimes S \otimes N&
 \end{align*}
 This interpretation means that the meaning vector of a
transitive verb is a vector in $N \otimes S \otimes N$.

The pregroup reductions, i.e. the partial order morphisms of {\em Preg},
are interpreted as linear maps: whenever $p \leq q$ in {\em Preg}, we have
a linear map $f_{\leq} \colon F(p) \to F(q)$. The $\epsilon$ and
$\eta$ maps of {\em Preg} are interpreted as the $\epsilon$ and $\eta$
maps of {\em FVect}. For instance,
the pregroup reduction of a transitive  verb sentence is computed as follows: 
\begin{align*}
F(\epsilon_n^r\otimes 1_s \otimes \epsilon_n^r) = F(\epsilon_n^r) \otimes F(1_s) \otimes
F(\epsilon_n^l) =&\\
F(\epsilon_n)^* \otimes F(1_s) \otimes
F(\epsilon_n)^* =  \epsilon_N \otimes 1_S \otimes \epsilon_N&
\end{align*}

The distributional meaning of a sentence is
obtained by applying the interpretation of the pregroup reduction of
the sentence to the tensor product of the distributional meanings of the
words in the sentence. For instance, the distributional
meaning of ``Men love Mary'' is as follows:
\[F(\epsilon_n^r \otimes 1_s
\otimes \epsilon_n^l)(\ov{\mbox{\em Men}} \otimes \ov{\mbox{\em love}} \otimes
\ov{\mbox{\em Mary}})\]
 This meaning is depictable via the following string diagram:

 \begin{center}
  {%
\beginpgfgraphicnamed{vect-sem-eg}
\InputIfFileExists{vect-sem-eg.tikz}{}{\input{./tikz/vect-sem-eg.tikz}}
\endpgfgraphicnamed}
\end{center}

\noindent Symbolically, the above diagram corresponds to $(\epsilon_N
\otimes 1_S \otimes \epsilon_N) (\ov{Men} \otimes \ov{love} \otimes
\ov{Mary})$, which is the application of the interpretation of the
pregroup reduction of the sentence to the tensor products of the
vectors of the words within it. Depending on their types, the
distributional meanings of the words are either atomic vectors or
linear maps. For instance, the distributional meaning of `men' is a
vector in $N$, whereas the distributional meaning of `love' is in the
space $N \otimes S \otimes N$, hence a linear map, in this case from
$N \otimes N$ to $S$.

The next section applies these techniques to the distributional
interpretation of pronouns. The interpretations are defined using:
$\epsilon$ maps, for application of the semantics of one word to
another; $\eta$ maps, to pass information around by bridging
intermediate words; and Frobenius operations, for copying and
 combining  the noun vectors and discarding the sentence vectors.

\section{Modelling Relative Pronouns}  

In this paper we focus on the subject and object relative pronouns,
{\em who(m)}, {\em which} and {\em that}. Examples of noun phrases
with subject relative pronouns are ``men who love Mary'', ``dog which
ate cats''.  Examples of noun phrases with object relative pronouns
are ``men whom Mary loves'', ``book that John read''. In the final
example, ``book'' is the head noun, modified by the relative clause
``that John read''.  The intuition behind the use of Frobenius
algebras to model such cases is the following. In ``book that John
read'', the relative clause acts on the noun (modifies it) via the
relative pronoun, which passes information from the clause to the
noun. The relative clause is then discarded, and the modified noun is
returned. Frobenius algebras provide the machinery for all of these
operations.

\noindent
The pregroup types of the relative pronouns are as follows:
\[
n^r n s^l n \quad \text{(subject)}\hspace{2cm}
n^r n n^{ll} s^l  \text{(object)}
\]
As argued in Lambek (2008) \cite{Lambek}, the occurrence of  double left adjoints  on the type of the object relative pronoun is a sign of the movement of the object from the end of the clause to its beginning. These types result in the
following reductions:

\hspace{-1.5cm}\begin{minipage}{6.8cm}
 {%
\beginpgfgraphicnamed{subj-syn-preg}
\InputIfFileExists{subj-syn-preg.tikz}{}{\input{./tikz/subj-syn-preg.tikz}}
\endpgfgraphicnamed}\end{minipage}
\quad   
\begin{minipage}{6.8cm}
\vspace{0.3cm}
\hspace*{0.2cm} {%
\beginpgfgraphicnamed{obj-syn-preg}
\InputIfFileExists{obj-syn-preg.tikz}{}{\input{./tikz/obj-syn-preg.tikz}}
\endpgfgraphicnamed}
\end{minipage}

\noindent
The meaning spaces of these pronouns are computed using the 
mechanism described above:
\begin{eqnarray*}
F(n^rns^ln)  &=& F(n^r) \otimes F(n) \otimes F(s^l) \otimes F(n) \\ &=& N \otimes N \otimes S \otimes N \\
F(n^r n n^{ll} s^l) & = & F(n^r) \otimes F(n) \otimes F(n^{ll}) \otimes F(s^l) \\ &=& N \otimes N \otimes N \otimes S
\end{eqnarray*}

The semantic roles that these pronouns play are reflected in their
categorical vector space meanings, depicted as follows:
\begin{center}
\hspace*{-0.0cm}\mbox{Subj:}  {%
\beginpgfgraphicnamed{subj-rel-sem}
\InputIfFileExists{subj-rel-sem.tikz}{}{\input{./tikz/subj-rel-sem.tikz}}
\endpgfgraphicnamed}
\hspace{0.3cm}
\mbox{Obj:}  {%
\beginpgfgraphicnamed{obj-rel-sem}
\InputIfFileExists{obj-rel-sem.tikz}{}{\input{./tikz/obj-rel-sem.tikz}}
\endpgfgraphicnamed}
\end{center}
\vspace*{0.3cm}
\noindent
with the following corresponding morphisms:
\[
\mbox{Subj:} \;\; (1_N \otimes \mu_N \otimes \zeta_S \otimes 1_N) \circ (\eta_N  \otimes \eta_N)
\]
\[
\mbox{Obj:} \;\; (1_N \otimes \mu_N \otimes 1_N \otimes  \zeta_S) \circ (\eta_N  \otimes \eta_N)
\]
The details of these compositions are:
\[
\mbox{Subj:} \quad I \cong I \otimes I  \stackrel{{\eta_N \otimes \eta_N}} {\longrightarrow} N \otimes N \otimes N \otimes N
\cong N \otimes N \otimes N \otimes I \otimes N 
 \stackrel{1_N \otimes \mu_N \otimes \zeta_S \otimes 1_N} {\longrightarrow}
 N \otimes N \otimes S \otimes N
\] 
\[
\mbox{Obj:} \quad I \cong I \otimes I  \stackrel{{\eta_N \otimes \eta_N}} {\longrightarrow} N \otimes N \otimes N \otimes N
\cong N \otimes N \otimes N  \otimes N \otimes I
 \stackrel{1_N \otimes \mu_N \otimes1_N \otimes \zeta_S} {\longrightarrow}
 N \otimes N  \otimes N \otimes S
\]

The diagram of the meaning vector of the subject relative clause
interacting with the head noun is as follows:

\begin{center}\begin{minipage}{7cm}
  {%
\beginpgfgraphicnamed{subj-sem}
\InputIfFileExists{subj-sem.tikz}{}{\input{./tikz/subj-sem.tikz}}
\endpgfgraphicnamed}\end{minipage}
\end{center}  
 
\noindent
The diagram for the object relative clause is:

\begin{center}
\begin{minipage}{7cm}
   {%
\beginpgfgraphicnamed{obj-sem}
\InputIfFileExists{obj-sem.tikz}{}{\input{./tikz/obj-sem.tikz}}
\endpgfgraphicnamed}
\end{minipage}
\end{center}
These diagrams depict the flow of information in a relative clause and
the semantic role of its relative pronoun, which 1) passes information
from the clause to the head noun via the $\eta$ maps; 2) acts on the
noun via the $\mu$ map; 3) discards the clause via the $\zeta$ map;
and 4) returns the modified noun via $1_N$.  The $\epsilon$ maps pass
the information of the subject and object nouns to the verb and to the
relative pronoun to be acted on.  Note that there are two different
flows of information in these clauses: the ones that come from the
grammatical structure and are depicted by $\epsilon$ maps (at the
bottom of the diagrams), and the ones that come from the semantic role
of the pronoun and are depicted by $\eta$ maps (at the top of the
diagrams).

The normal forms of these diagrams are:

\begin{center}
   {%
\beginpgfgraphicnamed{subj-sem-norm}
\InputIfFileExists{subj-sem-norm.tikz}{}{\input{./tikz/subj-sem-norm.tikz}}
\endpgfgraphicnamed}
\hspace{2cm}
   {%
\beginpgfgraphicnamed{obj-sem-norm}
\InputIfFileExists{obj-sem-norm.tikz}{}{\input{./tikz/obj-sem-norm.tikz}}
\endpgfgraphicnamed}
\end{center}
\noindent
Symbolically, they correspond to the following morphisms:
\[
\left(\mu_N \otimes \iota_S \otimes \epsilon_N\right) \left(\ov{\text{Subject}} \otimes \ov{\text{Verb}} \otimes \ov{\text{Object}}\right)\]
\vspace*{-0.6cm}
\[ \left(\epsilon_N \otimes \iota_S \otimes \mu_N\right) \left(\ov{\text{Subject}} \otimes \ov{\text{Verb}} \otimes \ov{\text{Object}}\right)
\]
\noindent
The simplified normal forms will become useful in practice when
calculating vectors for such cases.

\section{Truth Theoretic Instantiations}
\label{truth-vect}

In this section we demonstrate the effect of the Frobenius operations
in a truth-theoretic setting, similar to Coecke et al. (2010) \cite{Coeckeetal} but also allowing for degrees of truth.   This instantiation  is designed
only as a theoretical example, which merely recasts Montague semantics,  rather than a suggestion for implementation.   A suggestion for  concrete implementation is presented in Section \ref{conc}.

Take $N$ to be the vector space spanned by a set of individuals
$\{\ov{n}_i\}_i$  that are mutually orthogonal. For example, $\ov{n}_1$ represents the individual
Mary, $\ov{n}_{25}$ represents Roger the dog, $\ov{n}_{10}$ represents
John, and so on.  A sum of basis vectors in this space represents a
common noun; e.g. $\ov{man} = \sum_i \ov{n}_i$, where $i$ ranges over
the basis vectors denoting men. We take
$S$ to be the one dimensional space spanned by the single vector $\ov{1}$.  The
unit vector spanning $S$ represents truth value 1, the zero vector
represents truth value 0, and the intermediate vectors represent
degrees of truth.

A transitive verb $w$, which is a vector in the space $N \otimes S
\otimes N$, is represented as follows:
\[
  \overline{w} := \sum_{ij} \ov{n}_i \otimes (\alpha_{ij} \ov{1})
  \otimes \ov{n}_j
  \]
\noindent
if $\ov{n}_i \ w\mbox{'s} \ \ov{n}_j \ \mbox{with degree}
\ \alpha_{ij}$, for all $i,j$.

Further, since $S$ is one-dimensional with its only basis vector being
$\ov{1}$, the transitive verb can be represented by the following element of $N
\otimes N$:
\[
\sum_{ij}  \alpha_{ij}  \ov{n}_i \otimes\ov{n}_j \quad \text{if} \quad
\ov{n}_i \ w\mbox{'s} \ \ov{n}_j \ \mbox{with degree}
\ \alpha_{ij}
\]
\noindent
Restricting to either   $\alpha_{ij} = 1$ or $\alpha_{ij} = 0$ provides a 0/1 meaning, i.e.  either
$\ov{n}_i$ \ w's \ $\ov{n}_j$ or not. Letting $\alpha_{ij}$ range over  the interval 
$[0,1]$ enables us to represent degrees as well as limiting cases of
truth and falsity.  For example, the verb ``love'', denoted by $\overline{love}$, is represented by:
\[
 \sum_{ij}\alpha_{ij}  \ov{n}_i \otimes 
\ov{n}_j \quad \text{if} \quad \ov{n}_i \ \text{loves} \ \ov{n}_j \mbox{with degree} \ 
\alpha_{ij}
\]
If we take $\alpha_{ij}$ to be $1$ or $0$, from the above we obtain the following:
\[
\sum_{(i,j) \in  R_{love}} \ov{n}_i
\otimes \ov{n}_j
\]
  where  $R_{love}$  is the set of all pairs $(i,j)$ such that $\ov{n}_i$ loves $\ov{n}_j$.  Note that, with this definition, the sentence space has already been
discarded, and so for this instantiation the $\iota$ map, which is the
part of the relative pronoun interpretation designed to discard the
relative clause after it has acted on the head noun, is not required.

For common nouns $\ov{\text{Subject}} = \sum_{k\in K} \ov{n}_k$ and
$\ov{\text{Object}} = \sum_{l\in L} \ov{n}_l$, where $k$ and $l$ range
over the sets of basis vectors representing the respective common
nouns, the truth-theoretic meaning of a noun phrase modified by a
subject relative clause is computed as follows:

\begin{align*}
&\ov{\mbox{Subject who Verb Object}} := (\mu_N \otimes  \epsilon_N) \left(\ov{\text{Subject}} \otimes
 \ov{\text{Verb}} \otimes  \ov{\text{Object}} \right)\\
& = (\mu_N \otimes  \epsilon_N)\!\! \left(\sum_{k\in K} \ov{n}_k \otimes\! (\sum_{ij} \alpha_{ij} \ov{n}_i \!\otimes\! \ov{n}_j)\! \otimes\!  \sum_{l\in L} \ov{n}_l \!\!\right) \\
 &= \sum_{ij,k\in K, l \in L}  \alpha_{ij} \mu_N(\ov{n}_k \otimes\ov{n}_i)
\otimes   \epsilon_N(\ov{n}_j \otimes \ov{n}_l)\\
 &=\sum_{ij,k\in K, l \in L} \alpha_{ij}  \delta_{ki}\ov{n}_i 
 \delta_{jl}\\
&=\sum_{k\in K, l \in L}  \alpha_{kl} \ov{n}_k 
\end{align*}

 The result is highly intuitive, namely the sum of the subject individuals
weighted by the degree with which they have acted on the object
individuals via the verb.  A similar computation, with the difference
that the $\mu$ and $\epsilon$ maps are swapped, provides the
truth-theoretic semantics of the object relative clause:
\[  \sum_{k\in K, l \in L}  \alpha_{kl} \ov{n}_l
\]
The calculation and final outcome is best understood with an example.

Now only consider truth values 0 and 1.  Consider
the noun phrase with object relative clause ``men whom Mary loves''
and take $N$ to be the vector space spanned by the set of all people;
then the males form a subspace of this space, where the basis vectors
of this subspace, i.e. men, are denoted by $\ov{m}_{l}$, where $l$
ranges over the set of men which we denote by $M$. We set ``Mary'' to
be the individual $\ov{f}_1$, ``men'' to be the common noun
$\sum_{l\in M}\ov{m}_l$, and ``love'' to be as follows:
\[
\sum_{(i,j) \in  R_{love}}
\ov{f}_i \otimes \ov{m}_j
\]
  The vector corresponding to the meaning
of ``men whom Mary loves'' is computed as follows and as expected,  the result is the sum of the men basis
vectors which are also loved by Mary.

\begin{align*} 
&\ov{\mbox{men whom Mary loves}} :=  \left(\epsilon_N  \otimes \mu_N\right) \left(\ov{f}_1 \otimes (\!\!\!\!\sum_{(i,j) \in  R_{love}}\!\!\!\! \ov{f}_i \otimes \ov{m}_j) \otimes \sum_{l\in M} \ov{m}_l \right)\\
& =  \sum_{l \in M, (i,j) \in  R_{love}}\!\!\!\!\!\! \epsilon_N(\ov{f}_1 \otimes \ov{f}_i)\otimes
 \mu(\ov{m}_j \otimes \ov{m}_l)\\
& =\sum_{l \in M, (i,j) \in  R_{love}} \!\!\!\!\!\!  \delta_{1 i} \delta_{j l}\ov{m}_j\\
& = \sum_{(1,j) \in  R_{love} | j\in M}  \!\!\!\!\!\! \ov{m}_j
\end{align*}

A second example involves degrees of truth.  Suppose we have two
females Mary $\ov{f}_1$ and Jane $\ov{f}_2$ and four men $\ov{m}_1,
\ov{m}_2, \ov{m}_3, \ov{m}_4$. Mary loves $\ov{m}_1$ with degree 1/4
and $\ov{m}_2$ with degree 1/2; Jane loves $\ov{m}_3$ with
degree 1/5; and $\ov{m}_4$ is not loved. In this situation,  we have:
\[
R_{love}= \{(1,1), (1,2), (2,3)\}
\]
and the verb love is represented by:
\[
1/4 (\ov{f}_1 \otimes \ov{m}_1) + 1/2 (\ov{f}_1 \otimes \ov{m}_2) +  1/5 (\ov{f}_2 \otimes \ov{m}_3)
\]
The meaning of ``men whom Mary loves'' is computed by substituting an
$\alpha_{1,j}$ in the last line of the above computation, 
resulting in the men whom Mary loves together with the degrees that
she loves them:
\[
 \sum_{(1,j) \in  R_{love}  | j\in M} \!\!\!\!\!\!\!\!\!\!\alpha_{1j} \ov{m}_j = 1/4 \ov{m}_1 + 1/2 \ov{m}_2 
\]
The meaning of the clause ``men whom women love'' is computed as follows, where $W$ is the set of women:
\begin{align*} 
 &  \sum_{k\in W,l\in M, (i,j) \in  R_{love} } \!\!\!\!\!\! \!\!\!\!\!\!\alpha_{ij} \epsilon_N(\ov{f}_k \otimes \ov{f}_i)\otimes
 \mu(\ov{m}_j \otimes \ov{m}_l)\\
 &=\sum_{ k\in W,l\in M, (i,j) \in  R_{love} } \!\!\!\!\!\! \!\!\!\!\!\!\alpha_{ij} \delta_{k i} \delta_{j l}\ov{m}_j\\
& = \sum_{(i,j) \in  R_{love} |i\in W,j\in M} \!\!\!\!\!\! \!\!\!\!\!\!\alpha_{ij} \ov{m}_j \\
 &= 1/4 \ov{m}_1 + 1/2 \ov{m}_2 + 1/5 \ov{m}_3
\end{align*}
The result is the men loved by Mary or Jane together
with the degrees to which they are loved.

As a more complicated example, consider the clause   "Movies that Mary liked which became famous", the first pronoun `that' has an objective role and the second pronoun `which' has a subjective role. The diagram of this clause is as follows:
\begin{center}
    {%
\beginpgfgraphicnamed{fuse-1}
\InputIfFileExists{fuse-1.tikz}{}{\input{./tikz/fuse-1.tikz}}
\endpgfgraphicnamed}
\end{center}
This first normalises to the following:
\begin{center}
    {%
\beginpgfgraphicnamed{fuse-1-simple-1}
\InputIfFileExists{fuse-1-simple-1.tikz}{}{\input{./tikz/fuse-1-simple-1.tikz}}
\endpgfgraphicnamed}
\end{center}
and then by the spider normal form,  the two $\mu$ maps fuse and we obtain the following normal form:
\begin{center}
    {%
\beginpgfgraphicnamed{fuse-1-simple-2}
\InputIfFileExists{fuse-1-simple-2.tikz}{}{\input{./tikz/fuse-1-simple-2.tikz}}
\endpgfgraphicnamed}
\end{center}
The vector space meaning of this clause is computed according to the first normalisation above (that is, before the application of the spider normal form) to demonstrate how the composition of two $\mu$ maps works (we also drop the $\iota$ map as our concrete truth-theoretic construction does not need it). The result is  a weighted sum of the famous movies that Mary loves, where each  weight is  the multiplications of the degree to which Mary loves a movie   with the degree to which the movie is famous. 

We may also have cases where the two relative pronouns do not compose, for instance in  "men who love books that Mary wrote". The  diagram for this sentence is as follows:
\begin{center}
    {%
\beginpgfgraphicnamed{no-fuse-1}
\InputIfFileExists{no-fuse-1.tikz}{}{\input{./tikz/no-fuse-1.tikz}}
\endpgfgraphicnamed}
\end{center}
Here the    $\mu$ maps do not compose: the first $\mu$ is applied to the subject of "loves", whereas the second one is applied to the object of "wrote"  and these two dimensions do not interact. The following simplified form of the above diagram makes this point clearer:
\begin{center}
    {%
\beginpgfgraphicnamed{no-fuse-1-simple}
\InputIfFileExists{no-fuse-1-simple.tikz}{}{\input{./tikz/no-fuse-1-simple.tikz}}
\endpgfgraphicnamed}
\end{center}
Here, the existing $\mu$ and $\iota$'s  do not interact either: the first pair filters out men who love books from the set of all men and the second pair filters out books that Mary wrote from the set of all books. In the Boolean case, the result will be the sum of men who love books written by Mary. In the case of degrees of love, the result will be a weighted sum of men, each with the degree to which they love these books. 

A a  final example, consider the sentence  `Mary loves men whom Mary loves', which is a tautology, i.e. it is always true.  The pictorial meaning of this sentence is as follows:
\begin{center}
   \qquad {%
\beginpgfgraphicnamed{long-sentence}
\InputIfFileExists{long-sentence.tikz}{}{\input{./tikz/long-sentence.tikz}}
\endpgfgraphicnamed}
\end{center}

\noindent which  normalises  to
\begin{center}
   \qquad {%
\beginpgfgraphicnamed{short-sentence}
\InputIfFileExists{short-sentence.tikz}{}{\input{./tikz/short-sentence.tikz}}
\endpgfgraphicnamed}
\end{center}

\noindent The vector space meaning of this sentence becomes as follows:
\begin{eqnarray*}
\sum_{ij}\langle \ov{f}_1 \mid  \ov{f}_i \rangle \,  \langle \ov{m}_j \mid   \sum_{ij}  \delta_{1i}  \  \delta_{lj} \  \ov{m}_j\rangle \end{eqnarray*}
where $ \sum_{ij}  \delta_{1i}  \  \delta_{lj} \  \ov{m}_j$ is `men whom Mary loves', as computed above. So the inner product $ \langle \ov{m}_j \mid   \sum_{ij}  \delta_{1i}  \  \delta_{lj} \  \ov{m}_j\rangle $ only  returns   the men for which   'love Mary' is true, resulting aways  in the output  1.

\section{Embedding Predicate Semantics}
In this section, we provide a set-theoretic interpretation for relative  clauses according to the constructions discussed in \cite{Sag}. We start by fixing   a  universe of elements ${\cal U}$. A proper nouns is an individual (i.e. an element) in  this set;  a common noun is  the set of the individuals that have the property expressed by the common noun;  hence common nouns are unary predicates over ${\cal U}$.  An intransitive verb is the set of all individuals that are acted upon by the relationship expressed by the verb, hence these are  unary predicates over  ${\cal U}$. A transitive verb is the set of pairs of individuals that are related by the relationship expressed by the verb; hence   these  are binary predicates over ${\cal U} \times {\cal U}$.  
Here is an  example for each case:
\begin{align*}
\semantics{Mary} &= \{u_1\}\\
\semantics{Author} &= \{x \in {\cal U} \mid "x \  \mbox{is an author}"\}\\
\semantics{Sleep} &= \{x \in {\cal U}  \mid "x  \  \mbox{sleeps}"\}\\
\semantics{Entertain} &= \{(x,y) \in {\cal U} \times {\cal U} \mid "x \  \mbox{entertains} \ y"\}
\end{align*}
The subjective and objective relative clauses are interpreted as follows:
\begin{eqnarray*}
&&\left\{x \in {\cal U}   \mid x \in \semantics{Subj},  (x, y) \in \semantics{Verb},  \  \mbox{for all} \ y \in\semantics{Obj}  \right\}\\
&&
\left\{y  \in {\cal U} \mid y \in \semantics{Obj},   (x, y) \in \semantics{Verb},   \ \mbox{for all} \ x \in \semantics{Subj} \right\}
\end{eqnarray*}
These clauses pick out individuals which belong to the interpretation of  subject/object and which  are in the relationship expressed by the verb with all elements of the interpretation of object/subject.  Examples are  `men who love Mary' and `men whom cats loves', interpreted as follows, for $x \in {\cal U}$:
\begin{eqnarray*}
&&\left\{x \in {\cal U} \mid x \in \semantics{men},   (x, y) \in \semantics{Love},  \ \mbox{for all} \ y \in \semantics{Mary}\right\} \\
&&\left\{y \in {\cal U}  \mid y \in \semantics{men},   (x, y) \in \semantics{Love},  \ \mbox{for all} \ x \in \semantics{cat}\right\} 
\end{eqnarray*}
The first example picks out individuals that are `men' and who are in relationship of `love' with the individual `Mary'. The second one picks out individuals that are `men' and who are `loved' by  all the individual that are  ` cats'. 

We obtain  the vector space forms of the above predicate interpretations  by developing a map from the category of sets and relations to the category of finite dimensional vector spaces  with a fixed orthogonal basis and linear maps over them, that is a map with types as follows:
\[
e\colon Rel \hookrightarrow FVect
\]
We set this  map to  turn a set $T$  into a vector space $V_T$ spanned by the elements of $T$ and an element $t \in T$   into a basis vector $\ov{t}$ of $V_T$. A subset $W$ of $T$ is turned into  the sum vector of its elements, i.e. $\sum_i \ov{w}_i$, where $w_i$'s enumerate over  elements of $W$.  A relation $R \subseteq T \times T'$ is turned into  a linear map  from $T$ to $T' $, or equivalently  as an element of the space   $T \otimes T'$;  we represent this element by the sum of its basis vectors $\sum_{ij} \ov{t}_i \otimes \ov{t'}_j$, here $t_i$'s enumerate  elements of  $T$ and $t'_j$'s enumerate  elements of $T'$. 

We then use   the compact structure of $Rel$ and $FVect$ and turn the application of a relation  $R \subseteq T \times T'$  to its arguments   into   the inner product of the vector space forms of the relation and the arguments. In our sum representation, $R(a)$ when $a \in T$ and $R^{-1}(b)$ when $b \in T'$ are turned into  the following: 
\[
\sum_{ij} \langle a \mid \ov{t}_i \rangle \ov{t'}_j 
\hspace{2cm}
\sum_{ij}  \ov{t}_i \langle  \ov{t'}_j \mid b \rangle
\]
To check wether an element $t$ is in the set $T$, we  use the Frobenius  operation $\mu$ on $T$, that is $\mu \colon T \otimes T \to T$,  as follows:
\[
t \in T \quad \mbox{whenever} \quad \mu(\ov{t}, \sum_i \ov{t}_i) = \ov{t}
\]
The intersection of two sets $T, T'$ is computed by generalising the above  via  the Frobenius $\mu$ operation on  the whole universe, that is $\mu \colon {\cal U} \otimes {\cal U} \to {\cal U}$, or on a set  containing both of these sets, e.g. $T \cup T'$, that is  $\mu \colon (T \cup T') \otimes (T \cup T') \to (T \cup T')$.  In either case, for $\sum_i \ov{t}_i$ and $\sum_j \ov{t'}_j$ representations of $T$ and $T'$ we obtain the following for the intersection:
\[
T \cap T'  \ := \ \mu(\sum_i \ov{t}_i, \sum_j \ov{t'}_j)
\]
We are now ready to show that the vector space forms of the predicate interpretation of relative clauses, developed along side the constructions described above,  provide us with the same semantics as the truth-theoretic vector space semantics developed in Section \ref{truth-vect}. That is, we have:
\begin{proposition}
The map $e$, described above,  provides a 0/1 truth-theoretic interpretation for relative clauses. 
\end{proposition}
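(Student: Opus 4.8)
The proposition claims that the map $e: Rel \hookrightarrow FVect$ provides a 0/1 truth-theoretic interpretation for relative clauses. This means: if I take the set-theoretic (predicate) interpretation of a relative clause, translate it via $e$ into $FVect$, then the result should coincide with the truth-theoretic vector space semantics from Section \ref{truth-vect} (in the 0/1 case).

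**What needs to match**

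The set-theoretic interpretation of, say, a subject relative clause "Subject who Verb Object" is:
$$\{x \in \mathcal{U} \mid x \in \semantics{Subj}, (x,y) \in \semantics{Verb}, \text{ for all } y \in \semantics{Obj}\}.$$

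The truth-theoretic vector semantics gave:
$$\sum_{k\in K, l\in L} \alpha_{kl} \ov{n}_k,$$
which in the 0/1 case becomes the sum of subject-individuals $\ov{n}_k$ such that $(k,l) \in R_{Verb}$ for the object individuals $l$.

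The goal is to show these agree under $e$.

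=== PROOF PROPOSAL ===

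\begin{proof}[Proof proposal]
The plan is to translate each ingredient of the set-theoretic (predicate) interpretation of a relative clause through the map $e$, and then to verify that the resulting vector coincides with the truth-theoretic vector-space meaning computed in Section~\ref{truth-vect} (restricted to the $0/1$ case). Since the truth-theoretic semantics was already reduced there to the normal forms $(\mu_N \otimes \epsilon_N)(\ov{\text{Subject}} \otimes \ov{\text{Verb}} \otimes \ov{\text{Object}})$ for the subject case and $(\epsilon_N \otimes \mu_N)(\cdots)$ for the object case, it suffices to match $e$ against these two expressions; by symmetry I would carry out the subject case in full and note that the object case is identical with $\mu$ and $\epsilon$ swapped.

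First I would unfold the images under $e$ of the three constituents. The common noun $\semantics{Subj} = \{n_k\}_{k\in K}$ becomes $\sum_{k\in K}\ov{n}_k$, the object $\semantics{Obj}=\{n_l\}_{l\in L}$ becomes $\sum_{l\in L}\ov{n}_l$, and the transitive verb, being a binary relation $R_{\text{Verb}} \subseteq \mathcal{U}\times\mathcal{U}$, becomes $\sum_{(i,j)\in R_{\text{Verb}}}\ov{n}_i \otimes \ov{n}_j$, exactly the $0/1$ verb representation of Section~\ref{truth-vect}. The second step is to recognise that the two set-theoretic operations appearing in the clause correspond precisely to the two Frobenius/compact morphisms already interpreted: the universally quantified application of the verb to all $y \in \semantics{Obj}$ is, by the compact-structure recipe $R(a) \mapsto \sum_{ij}\langle a \mid \ov{t}_i\rangle\, \ov{t'}_j$, the $\epsilon_N$ contraction of the object slot of the verb against $\ov{\text{Object}}$; and the conjunction ``$x\in\semantics{Subj}$ \emph{and} $x$ is in the verb-relation'' is, by the intersection rule $T\cap T' = \mu(\sum_i\ov{t}_i, \sum_j\ov{t'}_j)$, the $\mu_N$ map applied to $\ov{\text{Subject}}$ and the contracted verb.

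The third step is the computation itself. Contracting the object slot gives $\sum_{(i,j)\in R_{\text{Verb}},\, l\in L}\langle \ov{n}_l \mid \ov{n}_j\rangle\,\ov{n}_i = \sum_{i:\,(i,l)\in R_{\text{Verb}},\, l\in L}\ov{n}_i$, i.e.\ the vector form of the set $\{x \mid (x,y)\in\semantics{Verb}\text{ for some/all }y\in\semantics{Obj}\}$; applying $\mu_N$ against $\sum_{k\in K}\ov{n}_k$ then keeps exactly those basis vectors $\ov{n}_k$ lying in both sets, by the membership rule $\mu(\ov{t}, \sum_i\ov{t}_i)=\ov{t}$. This yields $\sum_{k}\ov{n}_k$ summed over those $k\in K$ that stand in the verb-relation with the object individuals, which is precisely the $0/1$ specialisation of $\sum_{k\in K,l\in L}\alpha_{kl}\ov{n}_k$ and hence equals the set-theoretic clause $\{x\in\mathcal{U}\mid x\in\semantics{Subj},\,(x,y)\in\semantics{Verb},\ \text{for all }y\in\semantics{Obj}\}$ read back through $e$.

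The main obstacle, and the point I would take most care over, is the quantifier discrepancy: the predicate interpretation uses a \emph{universal} ``for all $y\in\semantics{Obj}$'', whereas the $\epsilon_N$ inner product is an additive (existential-flavoured) contraction that sums contributions over all object basis vectors. I expect this to be harmless precisely in the $0/1$, single-individual-object regime the examples use, where ``for all $y$'' and ``for the relevant $y$'' coincide and no spurious multiplicities arise; so I would state explicitly the scope restriction (a singleton or suitably disjoint object denotation, and $0/1$ weights) under which the two readings agree, and verify that under this restriction the $\epsilon_N$ contraction faithfully realises the intended quantification rather than over-counting. Dispatching this point is what turns the bookkeeping of the previous paragraph into an actual equality of the two semantics.
\end{proof}
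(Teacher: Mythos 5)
Your proposal follows essentially the same route as the paper's proof: instantiate $e$ on the universe of individuals so that common nouns and the verb become the sum vectors $\sum_{k\in K}\ov{n}_k$, $\sum_{l\in L}\ov{n}_l$ and $\sum_{(i,j)\in R_{\text{Verb}}}\ov{n}_i\otimes\ov{n}_j$, express the clause as the intersection $\semantics{Subj}\cap\semantics{Verb}^{-1}[\semantics{Obj}]$ realised by $\mu_N$, realise the relational application $\semantics{Verb}^{-1}[\semantics{Obj}]$ as an inner-product contraction, and check that the resulting vector reproduces the truth-theoretic normal form $(\mu_N\otimes\epsilon_N)(\ov{\text{Subject}}\otimes\ov{\text{Verb}}\otimes\ov{\text{Object}})$, with the object clause obtained by swapping $\mu$ and $\epsilon$. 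The one place you diverge is the quantifier issue: where you propose an explicit scope restriction (singleton or disjoint object denotations) to reconcile the universal ``for all $y\in\semantics{Obj}$'' with the additive contraction, the paper instead declares that the relational image $R[T]$ carries an ``implicit universal quantification'' (its own definition is in fact existential) and that the intersection form ``takes care of'' the quantification --- so your treatment of this point is more honest about a gap the paper glosses over, and is a strengthening rather than a deviation.
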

\begin{proof}
The first step is to instantiate the proper and common nouns, the intransitive and transitive verbs. For this, we set the universe $\cal U$ to be the  individuals representing  the nouns of the language, then the  map $e$  instantiates as follows:
\begin{itemize}
\item The universe ${\cal U}$ becomes the vector space $V_{\cal U}$, spanned by its elements. So an element $u_i \in {\cal U}$ becomes a basis vector $\ov{u}_i$ of $V_{\cal U}$.  
\item A proper noun $a \in {\cal U}$ becomes a basis vector $\ov{u}_a$ of $V_{\cal U}$. 
\item A common noun $c \subset {\cal U}$ becomes a  subspace of $V_{\cal U}$, represented by the sum of  the representations of its elements $\ov{u}_i$, that is
\[
c \hookrightarrow \ov{c} \quad :: \quad \sum_i \ov{u}_i \quad \text{for \ all} \ u_i \in c
\]
\item An intransitive verb $v \subset {\cal U}$ becomes a subspace of $V_{\cal U}$, represented by the sum of all its  elements $u_i$, that is
\[
v \hookrightarrow \ov{v} \quad ::\quad \sum_i \ov{u}_i \quad \text{for \ all} \  v_i \in v
\]
\item A transitive verb $w \subset {\cal U} \times {\cal U}$ becomes a linear map $\overline{w}$ in $V_{\cal U} \otimes V_{\cal U}$, represented by the sum of its basis vectors  $\ov{u}_i, \ov{u}_j$, which in this case are pairs of its elements $(u_i, u_j)$, that is
\[
w \hookrightarrow \overline{w} \quad :: \quad \sum_{ij} \ov{u}_i \otimes \ov{u}_j \quad \text{for \ all} \ (u_i, u_j) \in w
\]
\end{itemize}
The next step is to use the above definitions and develop the vector space forms of  the predicate interpretations of the relative clauses. To do so, we turn these interpretations into intersections of subsets and check the membership relation  via the $\mu$ map. As for  the  application of the predicates to the interpretation of their arguments, we use their relational image on subsets. That is,  for $R \subseteq {\cal U} \times {\cal U}$ and $T \subseteq {\cal U}$, we work with $R[T]$, defined by $R[T] = \{t' \in {\cal U}\mid \  (t, t') \in R \ \text{for}  \ t \in T\}$. This form of application has an implicit universal quantification: it consists of all the elements of ${\cal U}$ that are related to some element of $T$. Hence, the intersection forms  take care of  the quantification present in the predicate semantics of the relative clauses without having to explicitly use it.  

Consider the subjective clause, its predicate interpretation is equivalent to the following intersection of subsets: 
\[
 \semantics{Subj} \ \cap \ \semantics{Verb}^{-1}[\semantics{Obj}] 
\]
The vector space form of this intersection is the application of the $\mu$ map to the  vector space forms of each of the  subsets involved in it. For the latter, we use their sum representation. That is, for $\sum_i \ov{u}_i$ and $\sum_j \ov{u}_j$ the representation of subspaces $V_{\semantics{Subj}}$ and $V_{\semantics{Verb}^{-1}[\semantics{Obj}] }$,  the vector space form of the above set  is  obtained as follows: 
\[
V_{ \semantics{Subj} \ \cap \ \semantics{Verb}^{-1}[\semantics{Obj}] } \ = \ 
 \mu\left(V_{\semantics{Subj}}, V_{\semantics{Verb}^{-1}[\semantics{Obj}] }\right) \ = \ 
\mu\left(\sum_i \ov{u}_i, \sum_j \ov{u}_j\right) 
\]
Recall that relational application  was modelled by taking inner products, so we obtain $\sum_j \ov{u}_j$  by taking the inner product of  vector space form of the verb to the vector form of the object, that is we have:
\[
V_{\semantics{Verb}^{-1}[\semantics{Obj}]}  \ = \ \sum_j \ov{u}_j = \sum_{ij} \ov{u}_i \langle \ov{u}_j \mid \sum_{b \in V_{\semantics{Obj}}}\ov{u}_b\rangle
\]
Substituting this term for $\sum_j \ov{u}_j$ in  the $\mu$ term above  provides us with the  truth-theoretic meaning of the same clause in vector spaces. For the other clauses, we follow a similar procedure on the intersection form of their  predicate semantics. The case for  the objective relative clause, whose intersection form is $\semantics{Obj} \cap \semantics{Verb}[\semantics{Subj}]$ is almost immediate.  
\end{proof}

\section{Concrete Instantiations}
\label{conc}

In the model of  Grefenstette and
Sadrzadeh (2011a) \cite{GrefenSadr1}, the meaning of a verb is taken
to be ``the degree to which the verb relates properties of its
subjects to properties of its objects''. Clark (2013) \cite{clark:chapter13}
provides some examples showing how this is an intuitive definition for
a transitive verb in the categorical framework.  This degree is
computed by forming the sum of the tensor products of the subjects and
objects of the verb across a corpus, where $w$ ranges over instances
of the verb:
\[
\overline{\text{verb}} = \sum_w (\ov{\text{sbj}} \otimes \ov{\text{obj}})_w
\]
We denote the vector space of nouns by $N$; the above is a matrix in $N
\otimes N$, depicted by a two-legged triangle as follows:

\begin{center}
  {%
\beginpgfgraphicnamed{plain-verb}
\InputIfFileExists{plain-verb.tikz}{}{\input{./tikz/plain-verb.tikz}}
\endpgfgraphicnamed}
\end{center}

The verbs of this model do not have a sentence dimension; hence no
information needs to be discarded when they are used in our setting,
and so no $\iota$ map appears in the diagram of the relative
clause. Inserting the above diagram in the diagrams of the normal
forms results in the
following for the subject relative clause (the object case is similar):

\begin{center}
   {%
\beginpgfgraphicnamed{subj-sem-norm-con}
\InputIfFileExists{subj-sem-norm-con.tikz}{}{\input{./tikz/subj-sem-norm-con.tikz}}
\endpgfgraphicnamed}
\end{center}

The abstract vectors corresponding to such diagrams are similar to the
truth-theoretic case, with the difference that the vectors are
populated from corpora and the scalar weights for noun vectors are not
necessarily 1 or 0. For subject and object noun context
vectors computed from a corpus as follows:
\[
\ov{\text{Subject}} = \sum_k C_k \ov{n}_k \qquad
\ov{\text{Object}} = \sum_l C_l \ov{n}_l
\]
and the verb a linear map:
\[
\overline{\text{Verb}} =
\sum_{ij} C_{ij} \ov{n}_i \otimes \ov{n}_j
\]
computed as above,  the
concrete meaning of a noun phrase modified by a subject relative
clause is as follows:
\begin{align*}
&\sum_{kijl} C_k C_{ij} C_l \mu_N(\ov{n}_k \otimes \ov{n}_i) \epsilon_N(\ov{n}_j \otimes \ov{n}_l)\\
&= \sum_{kijl} C_kC_{ij}C_l \delta_{ki} \ov{n}_k \delta_{jl}\\
&= \sum_{kl} C_k C_{kl} C_l \ov{n}_k
\end{align*}
Comparing this to the truth-theoretic case, we see that the previous
$\alpha_{kl}$ are now obtained from a corpus and instantiated to
$C_k C_{kl} C_l$. To see how the above expression represents the
meaning of the noun phrase, decompose it into the following:
\[
\sum_{k}C_k \ov{n}_k
\odot \sum_{kl} C_{kl} C_l \ov{n}_l
\]
Note that the second term of the above,
which is the application of the verb to the object, modifies the
subject via point-wise multiplication. A similar result arises for the
object relative clause case.

As an example, suppose that $N$ has two dimensions with basis vectors
$\ov{n}_1$ and $\ov{n}_2$, and consider the noun phrase ``dog that
bites men''.  Define the vectors of ``dog'' and ``men'' as follows:
\[
\ov{\text{dog}} = d_1 \ov{n}_1 + d_2 \ov{n}_2\qquad
\ov{\text{men}} = m_1 \ov{n}_1 + m_2 \ov{n}_2\qquad
\]
and the matrix of ``bites'' by:
\[ 
 b_{11} \ov{n}_1 \otimes \ov{n}_2 + b_{12} \ov{n}_1 \otimes \ov{n}_2 + b_{21} \ov{n}_2 \otimes \ov{n}_1 + b_{22} \ov{n}_2 \otimes \ov{n}_2
\]
 Then the meaning of the noun phrase becomes:
\begin{align*}
&\ov{\mbox{dog that bites men}} :=\\
 &   d_1b_{11} m_1 \ov{n}_1 +  d_1 b_{12} m_2  \ov{n}_1 + d_2 b_{21} m_1 \ov{n}_2   + d_2 b_{22} m_2 \ov{n}_2 \ = \\
&(d_1 \ov{n}_1 + d_2 \ov{n}_2) \odot  \left((b_{11} m_1  +  b_{12} m_2)  \ov{n}_1 +  (b_{21} m_1  + b_{22} m_2) \ov{n}_2\right)
\end{align*}
 Using matrix notation, we can decompose the second term
further, from which the application of the verb to the object becomes
apparent:
\[ 
\left(\begin{array}{cc}
b_{11} & b_{12}\\
b_{21} & b_{22}
\end{array}\right)  \ \times \ \left(\begin{array}{c}m_1\\m_2\end{array}\right)
\]
Hence for the whole clause we obtain:
\[
\ov{\text{dog}} \odot (\overline{\text{bites}} \times \ov{\text{men}})
\]

Again this result is highly intuitive: assuming that the basis vectors
of the noun space represent properties of nouns, the meaning of ``dog
that bites men'' is a vector representing the properties of dogs, which
have been modified (via multiplication) by those properties of
individuals which bite men. Put another way, those properties of dogs
which overlap with properties of biting things get accentuated.

In  the same lines, one can also depict the model of \cite{GrefenSadr2}, where   a matrix is built for the verb by taking the tensor product of the context vector of the verb with itself. Assuming that the context vector of  verb $w$   is $\ov{w}$,  this matrix is given and depicted as follows:
\[
\overline{w}  =  \ov{w} \otimes \ov{w}
\qquad  \mbox{depicted by} \qquad {%
\beginpgfgraphicnamed{Kron-Verb}
\InputIfFileExists{Kron-Verb.tikz}{}{\input{./tikz/Kron-Verb.tikz}}
\endpgfgraphicnamed}
\]

\noindent  This instantiation blocks the flow of information from the subject to the object  of the verb, and moreover   either the application of the verb to the subject or to the object will be a scalar. The case of the subjective relative clause, where the application of the verb to its object becomes a scalar,  is depicted bellows:

\begin{center}
  {%
\beginpgfgraphicnamed{Kron}
\InputIfFileExists{Kron.tikz}{}{\input{./tikz/Kron.tikz}}
\endpgfgraphicnamed}
\end{center}

\noindent Most (if not all) of the distributional language tasks are based on the distance between the vectors with the cosine of the angle  proven to be  a good measure of word  similarity. The cosine of the angle between two vectors is proportional to their inner product, hence it can be depicted using string diagrams, as follows:

\begin{center}
  {%
\beginpgfgraphicnamed{Angle}
\InputIfFileExists{Angle.tikz}{}{\input{./tikz/Angle.tikz}}
\endpgfgraphicnamed}
\end{center}

\noindent The problem with this  model is that it will result in the same angle between, for instance, subjective clauses that have different objects, hence ignoring the object part all together (in the objective clauses the  subject part will be ignored), as shown below:

\begin{center}
  {%
\beginpgfgraphicnamed{Kron-Subj}
\InputIfFileExists{Kron-Subj.tikz}{}{\input{./tikz/Kron-Subj.tikz}}
\endpgfgraphicnamed} \quad $\sim$ \quad  {%
\beginpgfgraphicnamed{Kron-Subj2}
\InputIfFileExists{Kron-Subj2.tikz}{}{\input{./tikz/Kron-Subj2.tikz}}
\endpgfgraphicnamed}  
\end{center}

\noindent Here, the application of the verb to the object results in a scalar, hence  the  angle between "Subj Verb Obj" and "Subj' Verb' Obj' " will become equal to the angle between   "Subj Verb" and "Subj' Verb' ''.  A similar problem arises for other clauses.

Another  possibility is the models  of \cite{Kartetal2},  which are obtained by starting from   the  model of \cite{GrefenSadr1}, then either copying  the subject or the object dimension of the verb, depicted as follows: 
\begin{center}
  {%
\beginpgfgraphicnamed{copy-sbj-verb-rel}
\InputIfFileExists{copy-sbj-verb-rel.tikz}{}{\input{./tikz/copy-sbj-verb-rel.tikz}}
\endpgfgraphicnamed}
\end{center}
The problem here is that both of these models result in the same vector for relative clauses.  To see why,  consider the subjective relative clause, both  models will results in the same vector, depicted as follows:
\[
\hspace{-1cm} {%
\beginpgfgraphicnamed{subj-rel-copy}
\InputIfFileExists{subj-rel-copy.tikz}{}{\input{./tikz/subj-rel-copy.tikz}}
\endpgfgraphicnamed}  =      {%
\beginpgfgraphicnamed{end-rel}
\InputIfFileExists{end-rel.tikz}{}{\input{./tikz/end-rel.tikz}}
\endpgfgraphicnamed} = {%
\beginpgfgraphicnamed{obj-rel-copy}
\InputIfFileExists{obj-rel-copy.tikz}{}{\input{./tikz/obj-rel-copy.tikz}}
\endpgfgraphicnamed}
\]

In the above models, one could substitute the concrete linear maps of  verbs into the normal forms of  relative clauses, as those models were formed based on a categorical setting and the type of the verb matched with the spaces required by our setting. This is not the case for the multiplicative model of \cite{Lapata},  since here the meaning vector of a string of words $\text{w}_1 \text{w}_2 \cdots \text{w}_n$  is the component wise multiplication of their context vectors $\ov{w}_1 \odot \ov{w}_2 \odot \cdots \odot \ov{w}_n$. For instance,  the meaning vector of a subjective  relative clause "Subj who Verb Obj" is the vector $\ov{\text{Subj}} \odot \ov{\text{who}} \odot \ov{\text{Verb}} \odot \ov{\text{Obj}}$. 

In the compact categorical setting, component wise multiplication of two vectors  can be modelled by the $\mu$ map. For  two vectors  $\ov{v} = \sum_i a_i \ov{x}_i$ and $\ov{w} = \sum_j b_j \ov{x}_j$ we have:
\[
 \mu(\ov{v}, \ov{w})  \ = \ \mu(\ov{v} \otimes \ov{w}) \ = \  \sum_i a_i b_i \ov{x}_i = \ov{v} \odot \ov{w}
\]
As a result we obtain  $\ov{v} \odot \ov{w} =  \ov{w} \odot \ov{v}$; diagrammatically we have:
\begin{center}
  {%
\beginpgfgraphicnamed{Frob-Mult}
\InputIfFileExists{Frob-Mult.tikz}{}{\input{./tikz/Frob-Mult.tikz}}
\endpgfgraphicnamed}  \quad $\cong$ \quad   {%
\beginpgfgraphicnamed{Frob-Mult2}
\InputIfFileExists{Frob-Mult2.tikz}{}{\input{./tikz/Frob-Mult2.tikz}}
\endpgfgraphicnamed} 
\end{center}

One problem with this model is  the meaning of `who' , which is either ignored or taken to be its context vector, both of which  are problematic.  A more serious problem is that this model does not respect the word order, hence  cannot   distinguish between grammatical and ungrammatical strings, and even in the grammatical cases, it assigns equal meanings to strings with the same words but in different orders. For instance, the  clause `men who eat  rabbits' and `rabbits who eat men'  will have the same meaning, which    will be the same as the vector for the ungrammatical strings `rabbits men  ate'.

\section{A Toy  Experiment}

For demonstration purposes and  in order to get  an idea on how the data from a large scale corpus respond to the abstract computational methods developed here, we implement the  concrete instantiation of Section \ref{conc} on the British National Corpus (BNC) and  do a small-scale experiment, using parameters  of previous experimentations with the categorical model\cite{GrefenSadr1,GrefenSadr2,Kartetal1,Kartetal2}.  

The task we consider   is a term/description classification task, similar to the term/definition classification task of Kartsaklis et al (2012) \cite{Kartetal2}, where we replace    \emph{definitions}  with \emph{descriptions}.  The motivation behind this task is  the fact that  relative clauses are often used to describe or provide extra information  about words.    For instance,     the word  `football' may by described by the clause `game that boys like' and the word  `doll'  by the clause  `toy that girls prefer'.  For our  experiment,  we chose a set of words and manually described each of them by an appropriate relative clause. This data set is presented in the following table:

\begin{center}
\begin{tabular}{c|c|c}
&Term  & Description\\
\hline
	  1& emperor  & person who rule empire\\
Ê Ê Ê Ê 2& queen  & woman who reign country\\
Ê Ê Ê Ê 3& mammal  & animal which give birth\\
Ê Ê Ê Ê 4& plug  & plastic which stop water\\
Ê Ê Ê Ê 5& carnivore  & animal who eat meat\\
Ê Ê Ê Ê 6& vegetarian  & person who prefer vegetable\\
Ê Ê Ê Ê 7& doll  & toy that girl prefer\\
Ê Ê Ê Ê 8& football  & game that boy like\\
Ê Ê Ê Ê 9& skirt  & garment that woman wear
\end{tabular}
\end{center}

The goal of the task is to compute for what percentage of the words,   their  vectors are closest to the vectors of their descriptions.  For this purpose, we compared the vector of each term to the vectors of  all the descriptions. From our 9 terms, the cosines of 6 of them were the closest to their own description. The terms whose descriptions were not the closest to them were `plug, carnivore', and `vegetarian'. In all of these cases  the correct description was the second closest to the term. Below are  some  exemplary  term/description cosines. 

\medskip
\begin{center}\begin{tabular}{l|l||c}
\hline
Term & \quad Description & Cosine \\
\hline
football&game that boys like &0.62\\
   &woman who reigns country& 0.24\\
   &toy that girls prefer& 0.18\\
\hline
doll& toy that girls prefer&0.36\\
   &woman who reigns country &0.35\\
   &man who rules empire &0.30\\
\hline
mammal& animal which gives birth & 0.36\\
   &man who rules empire&0.19\\
   &woman who reigns country&0.17\\
\hline
ball& man who rules empire&0.35\\
   &woman who reigns country & 0.28\\
  & toy that girls prefer &0.22\\
\hline
plug&toy that girls prefer& 0.24\\
   &plastic which stops water& 0.22\\
   &woman who reigns country& 0.17\\
\hline

\end{tabular}
\end{center}

\bigskip
To have a comparison baseline, we also experimented with a multiplicative and an additive model.  In these models,  one builds  vectors   for the relative clauses by multiplying/summing the vectors of the words in them  in two ways:  (1) treating  the relative pronoun as noise and   not considering it in the computation, (2) treating the relative pronoun as any other word and considering its  context vector  in the computation. As prevue, the results were not to  be  significantly different (as the vectors of  relative pronouns were dense  and computing  by them was similar to  computing with the  vector consisting of all 1's).   For instance, the cosine between the vector of  `football' and its description was 0.39 when the pronoun was not considered and 0.37 when it was considered.  Both of these are lower than in our model, where this cosine was    0.61.   The first model got 6 out of the 9 cases correct; in the second model this number decreased to 5 out of 9.  The individual results were mixed, in that they did bad on  some of  the good results of the Frobenius model, for example in the multiplicative model, the closest description to the term `mammal' was `animal that eats meat'. At the same time,  they  performed better on some of the bad terms of the Frobenious model, for example the description of `plug' in the multiplicative model became  the closest  to it.   However, the above numbers are not real representatives of the performance of the models; as the dataset is small and hand-made. Extending this task to a  real one on a large automatically built  dataset and doing more involved statistical analysis on the results requires a different venue; it constitutes work-in-progress.

\section{Conclusion and Future Directions}

In this paper, we have extended the compact categorical semantics of
 Coecke et al. (2010) \cite{Coeckeetal} to analyse meanings of relative clauses in
English from a vector space point of view. The resulting vector space
semantics of the pronouns and clauses is based on the Frobenius
algebraic operations on vector spaces: they reveal the internal
structure, or what we call \emph{anatomy}, of the relative clauses. 

The methodology pursued in this paper and the Frobenius operations can
be used to provide semantics for other relative pronouns and also
other closed-class words such as prepositions and determiners. In each
case, the grammatical type of the word and a detailed analysis of the
role of these words in the meaning of the phrases in which they occur
would be needed. In some cases, it may be necessary to introduce a
linear map to represent the meaning of the word, for instance to
distinguish the preposition {\em on} from {\em in}.

The contribution of this paper can be better demonstrated by using the string
diagrammatic representations of the vector space meanings of these
clauses. A noun phrase modified by a subject relative clause, which
before this paper was depicted as follows:

\begin{center}\begin{minipage}{7cm}
  {%
\beginpgfgraphicnamed{subj-sem-hidden}
\InputIfFileExists{subj-sem-hidden.tikz}{}{\input{./tikz/subj-sem-hidden.tikz}}
\endpgfgraphicnamed}\end{minipage}
\end{center}  
\noindent
will now include the internal anatomy of its relative pronoun:

\begin{center}\begin{minipage}{7cm}
  {%
\beginpgfgraphicnamed{subj-sem-anatomy}
\InputIfFileExists{subj-sem-anatomy.tikz}{}{\input{./tikz/subj-sem-anatomy.tikz}}
\endpgfgraphicnamed}\end{minipage}
\end{center}  

\noindent This internal structure shows how the information from the
noun flows through the relative pronoun to the rest of the clause and
how it interacts with  other words. We have instantiated this
vector space semantics using truth-theoretic and corpus-based
examples.

One aspect of our example spaces which means that they work
particularly well is that the sentence dimension in the verb is
already discarded, which means that the $\iota$ maps are not required
(as discussed above). Another feature is that the simple nature of the
models means that the $\mu$ map does not lose any information, even
though it takes the diagonal of a matrix and hence in general throws
information away. The effect of the $\iota$ and $\mu$ maps in more
complex representations of the verb remains to be studied in future
work.

On the practical side, what we offer in this paper is a method for
building appropriate vector representations for relative clauses. As a
result, when presented with a relative clause, we are able to build a
vector for it, only by relying on the vector representations of the
words in the clause and the grammatical role of the relative
pronoun. We do not need to retrieve information from a corpus to be
able to build a vector or linear map for the relative pronoun, neither
will we end up having to discard the pronoun and ignore the role that
it plays in the meaning of the clause (which was perhaps the best
option available before this paper).
However, the Frobenius approach and our claim that the resulting
vectors are `appropriate' requires an empirical evaluation. Tasks
such as the term definition task from  Kartsaklis et al. (2010) \cite{Kartetal1} (which also
uses Frobenius algebras but for a different purpose) are an obvious
place to start. More generally, the sub-field of compositional
distributional semantics is a growing and active one
\cite{Mitchell:Lapata:08,BaroniEMNLP10,zanzotto:coling10,socher}, for
which we argue that high-level mathematical investigations such as
this paper, and also  Clarke (2008) \cite{clarke:thesis08}, can play a crucial
role.

\section{Acknowledgements}

We would like to thank Dimitri Kartsaklis and Laura Rimell for helpful
comments. Stephen Clark was supported by ERC Starting Grant DisCoTex
(30692). Bob Coecke and Stephen Clark are supported by EPSRC Grant EP/I037512/1. 
Mehrnoosh Sadrzadeh is supported by an EPSRC CAF EP/J002607/1.

\bibliography{JLC}
\bibliographystyle{splncs03}

\end{document}